\let\NAT@parse\undefined
\DeclareMathOperator*{\argmax}{argmax}
\newcommand{\E}{\mathbb{E}}
\newtheorem{lemma}{Lemma}
\newcommand{\Sspace}{\mathcal{S}}
\newcommand{\Aspace}{\mathcal{A}}
\newcommand{\Ospace}{\mathcal{O}}
\newcommand{\MOVE}{\textsc{Move}}
\newcommand{\MOVEOP}{\textsc{MoveOp}}
\newcommand{\LOOK}{\textsc{Look}}
\newcommand{\DETECT}{\textsc{Find}}
\newcommand{\Val}{\textsc{Val}}
\newcommand{\Norm}{\textsc{Norm}}
\newcommand{\Children}{\textsc{Ch}}
\newcommand{\abst}[1]{\hat{#1}}
\newcommand{\rmax}{R_{\text{max}}}
\newcommand{\rmin}{R_{\text{min}}}
\definecolor{purple}{rgb}{0.858, 0.08, 0.85}
\definecolor{darkgreen}{rgb}{0.08, 0.55, 0.08}
\definecolor{blue1}{rgb}{0.2, 0.2, 0.6}
\definecolor{green1}{rgb}{0.2, 0.6, 0.2}
\definecolor{green2}{rgb}{0.1, 0.4, 0.1}
\begin{document}

\title{\LARGE \bf
Multi-Resolution POMDP Planning for Multi-Object Search in 3D}

\author{
  Kaiyu Zheng$^{\dagger}$, Yoonchang Sung$^{*}$, George Konidaris$^{\dagger}$, Stefanie Tellex$^{\dagger}$
        \thanks{$^{\dagger}$Brown University, Providence, RI. $^{*}$MIT CSAIL, Cambridge, MA.}
        \thanks{Email: \{kzheng10, gdk, stefie10\}@cs.brown.edu, yooncs8@csail.mit.edu}
    }

\maketitle

\begin{abstract}
  Robots operating in households must find objects on shelves, under tables, and in cupboards. In such environments, it is crucial to search efficiently at 3D scale while coping with limited field of view and the complexity of searching for multiple objects. Principled approaches to object search frequently use Partially Observable Markov Decision Process (POMDP) as the underlying framework for computing search strategies, but constrain the search space in 2D. In this paper, we present a POMDP formulation for multi-object search in a 3D region with a frustum-shaped field-of-view. To efficiently solve this POMDP, we propose a multi-resolution planning algorithm based on online Monte-Carlo tree search. In this approach, we design a novel octree-based belief representation to capture uncertainty of the target objects at different resolution levels, then derive abstract POMDPs at lower resolutions with dramatically smaller state and observation spaces. Evaluation in a simulated 3D domain shows that our approach finds objects more efficiently and successfully compared to a set of baselines without resolution hierarchy in larger instances under the same computational requirement. We demonstrate our approach on a mobile robot to find objects placed at different heights in two 10m$^2\times$2m regions by moving its base and actuating its torso.
\end{abstract}



\section{Introduction}
\label{sec:introduction}
Robots operating in human spaces must find objects such as glasses, books, or cleaning supplies that could be on the floor, shelves, or tables. This search space is naturally 3D. When multiple objects must be searched for, such as a cup and a mobile phone, an intuitive strategy is to first hypothesize likely search regions for each target object based on semantic knowledge or past experience~\cite{kollar2009utilizing,aydemir2013avo}, then search carefully within those regions. Since the latter directly determines the success of the search, it is essential for the robot to produce an efficient search policy within a designated search region under limited field of view (FOV), where target objects could be partially or completely occluded. In this work, we consider the problem setting where a robot must search for multiple objects in a search region by actively moving its camera, with as few steps as possible (Figure~\ref{fig:mos3d}).

Searching for objects in a large search region requires acting over long horizons under various sources of uncertainty in a partially observable environment. For this reason, previous works have used Partially Observable Markov Decision Process (POMDP) as a principled decision-theoretic framework for object search \cite{xiao_icra_2019,atanasov2014nonmyopic,danielczuk2019mechanical}.  However, to ensure the POMDP is manageable to solve, previous works reduce the search space or robot mobility to 2D \cite{aydemir2013avo,wandzel2019multi,li2016act}, although objects exist in rich 3D environments. The key challenges lie in the intractability of maintaining exact belief due to large state space \cite{silver2010monte}, and the high branching factor for planning due to large observation space \cite{sunberg2018pomcpow,garg2019despot}.

\begin{figure}[t]
\centering
\includegraphics[width=\linewidth]{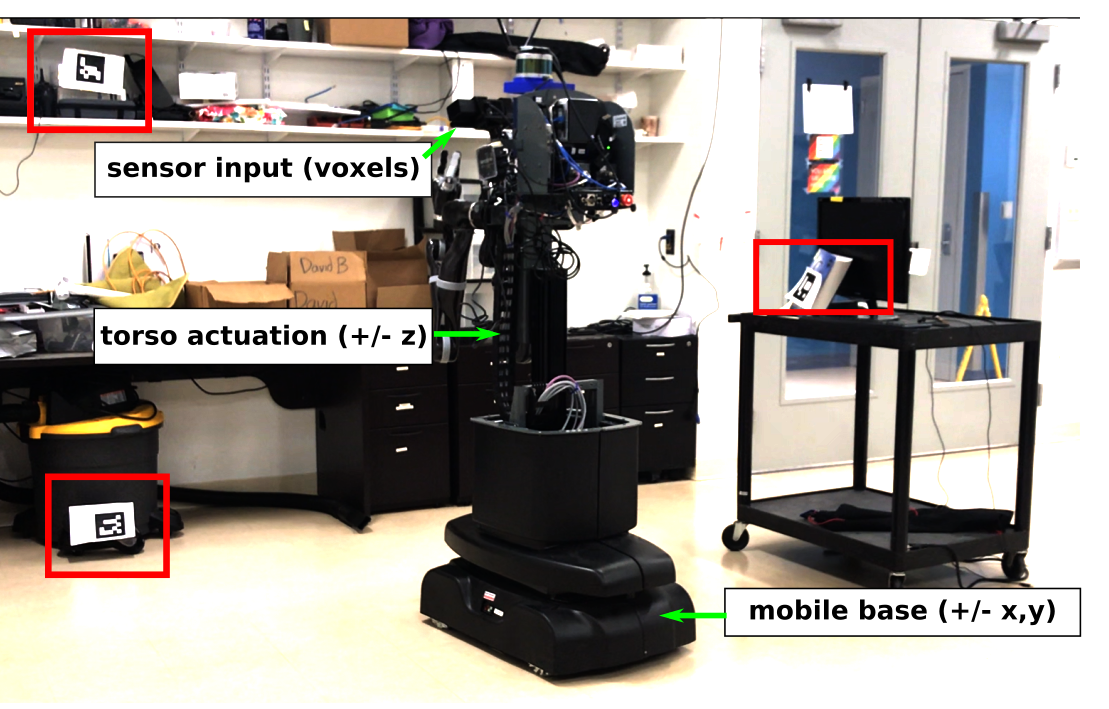}
\caption{An example of the 3D-MOS problem where a torso-actuated mobile robot is
  tasked to search for three objects placed at different heights in a lab
  environment. The objects are represented by paper AR tags marked by red
  boxes. Note that the robot must actively move itself due to limited field of
  view, and the objects can be occluded by the attached obstacles if viewed from
  the side.}
\label{fig:mos3d}
\end{figure}

In this paper, we introduce 3D Multi-Object Search (3D-MOS), a general POMDP formulation for the multi-object search task with 3D state and action spaces, and a realistic observation space in the form of labeled voxels within the viewing frustum from a mounted camera. Following the Object-Oriented POMDP (OO-POMDP) framework proposed by \citet{wandzel2019multi}, the state, observation spaces are factored by independent objects, allowing the belief space to scale linearly instead of exponentially in the number of objects. We address the challenges of computational complexity in solving 3D-MOS by developing several techniques that converge to an online multi-resolution planning algorithm. First, we propose a per-voxel observation model which drastically reduces the size of the observation space necessary for planning. Next, we present a novel octree-based belief representation that captures beliefs at different resolutions and allows efficient and exact belief updates. Then, we exploit the octree structure and derive abstractions of the ground problem at different resolution levels leveraging abstraction theory for MDPs \cite{li2006towards,bai2016markovian}.  Finally, a Monte-Carlo Tree Search (MCTS) based online planning algorithm, called Partially-Observable Upper Confidence bounds for Trees (POUCT) \cite{silver2010monte}, is employed to solve these abstract instances in parallel, and the action with highest value in its MCTS tree is selected for execution.

We evaluate the proposed approach in a simulated, discretized 3D domain where a robot with a 6 degrees-of-freedom camera searches for objects of different shapes and sizes randomly generated and placed in a grid environment. The results show that, as the problem scales, our approach outperforms exhaustive search as well as POMDP baselines without resolution hierarchy under the same computational requirement. We also show that our method is more robust to sensor uncertainty against the POMDP baselines.  Finally, we demonstrate our approach on a torso-actuated mobile robot in a lab environment (Figure~\ref{fig:seq}). The robot finds 3 out of 6 objects placed at different heights in two 10m$^2\times$2m regions in around 15 minutes.



\section{Background}
\label{sec:related_work}

POMDPs compactly represent the robot's uncertainty in target locations and its
own sensor \cite{kaelbling1998planning}, and OO-POMDPs factor the domain in terms of objects, which fits the
object search problem naturally \cite{wandzel2019multi}. Below, we first provide a brief overview of POMDPs and OO-POMDPs. Then, we discuss related work in object search.

\subsection{POMDPs and OO-POMDPs}
\label{sec:pomdp}

A POMDP models a sequential decision making problem where the environment state
is not fully observable by the agent. It is formally defined as a tuple
$\langle\Sspace,\Aspace,\Ospace,T,O,R,\gamma\rangle$, where
$\Sspace,\Aspace,\Ospace$ denote the state, action and observation spaces, and the functions $T(s,a,s')=\Pr(s'|s,a)$, $O(s',a,o)=\Pr(o|s',a)$, and
$R(s,a)\in\mathbb{R}$ denote the transition, observation, and reward models. The
agent takes an action $a\in\mathcal{A}$ that causes the environment state to transition from
$s\in\mathcal{S}$ to $s'\in\mathcal{S}$. The environment in turn returns the agent an observation $o\in\mathcal{O}$ and
reward $r\in\mathbb{R}$. A \emph{history} $h_t=(ao)_{1:t-1}$ captures all past actions and
observations. The agent maintains a distribution over states given current
history $b_t(s)=\Pr(s|h_t)$.
The agent updates its belief after taking an action and receiving an
observation by $b_{t+1}(s')=\eta\Pr(o|s',a)\sum_{s}\Pr(s'|s,a)b_t(s)$
where $\eta={\sum_s\sum_{s'}\Pr(o|s',a)\Pr(s'|s,a)b_t(s)}$
is the normalizing constant.
The task of the agent is to find a policy $\pi(b_t)\in\Aspace$ which maximizes the
expectation of future discounted rewards ${V^{\pi}}(b_t)=\E\left[\sum_{k=0}^{\infty}\gamma^{k}R(s_{t+k},\pi(b_{t+k}))\ |\ b_t\right]$
with a discount factor
$\gamma$.

An Object-Oriented POMDP (OO-POMDP) \cite{wandzel2019multi} (generalization of OO-MDP
  \cite{diuk2008object}) is a POMDP that considers the state
  and observation spaces to be factored by a set of $n$ objects where each belongs to a class with a set of attributes. A simplifying assumption is made for the 2D MOS domain that objects are independent so that the belief space scales linearly rather than exponentially in the number of objects: $b_t(s)=\prod_ib_t^i(s_i)$. We make this assumption for the same computational reason.

Offline POMDP solvers are often too slow to be practical for large domains
\cite{ross2008online}. State-of-the-art online POMDP solvers leverage sparse
belief sampling and MCTS to scale up to domains with large state spaces and to address the curse of history~\cite{silver2010monte,somani2013despot,sunberg2018pomcpow}.  POMCP
\cite{silver2010monte} is one such algorithm which combines particle belief
representation with Partially Observable UCT (POUCT), which extends the UCT
algorithm \cite{kocsis2006bandit} to POMDPs and is proved to be asymptotically optimal \cite{silver2010monte}. We build upon POUCT due to its optimality and simplicity of implementation.

\subsection{Related Work}
Previous work primarily address the computational complexity of object search by
hypothesizing likely regions based on object co-occurrence \cite{kollar2009utilizing,wixson1994using}, semantic knowledge \cite{aydemir2013avo} or language \cite{wandzel2019multi}, reducing the state space from 3D to 2D \cite{wandzel2019multi,wang2018efficient,sarmiento2003efficient,nie2016searching}, or constrain the sensor to be stationary \cite{danielczuk2019mechanical,dogar2014object}. Our work focuses on multi-object search within a 3D region where the robot actively moves the mounted camera, for example, through pan or tilt, or by moving itself.

Several works explicitly reason over the arrangement of occluded objects based on given geometry models of clutter \cite{xiao_icra_2019,nie2016searching,wong2013manipulation}. Our approach considers occlusion as part of the observation that contains no information about target locations and we do not require geometry models.

Many works formulate object search as a POMDP. Notably, \citet{aydemir2013avo} first infer a room to search in then perform search by calculating candidate viewpoints in a 2D plane.
\citet{li2016act} plan sensor movements online, yet assume objects are placed at the same surface level in a container with partial occlusion. \citet{xiao_icra_2019} address object fetching on a cluttered tabletop where the robot's FOV fully covers the scene, and that occluding obstacles are removed permanently during search. \citet{wandzel2019multi} formulates the multi-object search (MOS) task on a 2D map using the proposed Object-Oriented POMDP (OO-POMDP). We extend that work to 3D and tackle additional challenges by proposing a new observation model and belief representation, and a multi-resolution planning algorithm. In addition, our POMDP formulation allows fully occluded objects and can be in principle applied on different robots such as mobile robots or drones.

\section{Multi-Object Search in 3D}
\label{sec:mos3d}

The robot is tasked to search for $n$ static target objects (e.g. cup and book) of known type but unknown location in a search space that also contains static non-target obstacles. We assume the robot has access to detectors for the objects that it is searching for. The search region is a 3D grid map environment denoted by $G$. Let $g\in G\subseteq{R}^3$ be a 3D grid cell in the environment. We use $G^l$ to denote a grid at \emph{resolution level} $l\in\mathbb{N}$, and $g^l\in G^l$ to denote a grid cell at this level. When $l$ is omitted, it is assumed that $g$ is at the ground resolution level. We introduce the 3D-MOS domain as an OO-POMDP as follows:

\textbf{State space $\Sspace$.} An environment state $s=\{s_1,\cdots,s_n,s_r\}$ is factored in an object-oriented way, where $s_r\in\mathcal{S}_r$ is the state of the robot, and $s_i\in\mathcal{S}_i$ is the state of target object $i$. A robot state is defined as $s_r=(p,\mathcal{F})\in\mathcal{S}_r$ where $p$ is the 6D camera pose and $\mathcal{F}$ is the set of found objects. The robot state is assumed to be observable to the robot. In this work, we consider the object state to be specified by one attribute,  the 3D object pose at its center of mass, corresponding to a cell in grid $G$. We denote a state $s_i^l\in\mathcal{S}_i^l$ to be an object state at resolution level $l$, where $\mathcal{S}_i^l=G^l$.

\begin{figure}[t]
\centering
\includegraphics[width=0.75\linewidth]{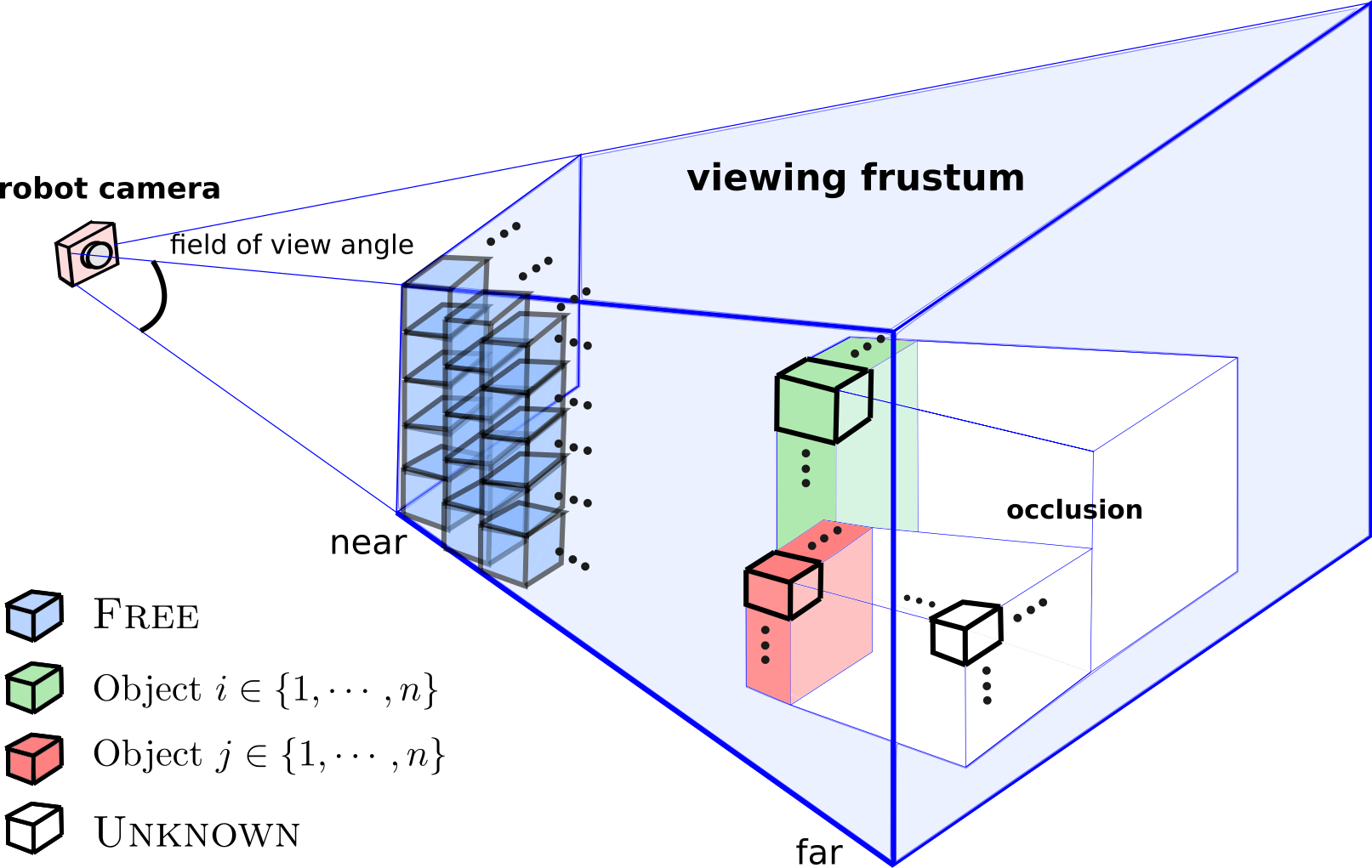}
\caption{Illustration of the viewing frustum and volumetric observation. The
  viewing frustum $V$ consists of $|V|$ voxels, where each $v\in V$ can be
  labeled as $i\in\{1,\cdots,n\}$, \textsc{Free} or \textsc{Unknown}.}
\label{fig:fov}
\vspace{-1em}
\end{figure}

\textbf{Observation space $\Ospace$.}
The robot receives an observation through a viewing frustum projected from a mounted camera. The viewing frustum forms the FOV of the robot, denoted by $V$, which consists of $|V|$ voxels. Note that the resolution of a voxel should be no lower than that of a 3D grid cell $g$. We assume both resolutions to be the same in this paper for notational convenience, hence $V\subseteq G$, but in general a voxel with higher
resolution can be easily mapped to a corresponding grid cell.

For each voxel $v\in V$, a \emph{detection function} $d(v)$ labels the voxel to be either an object $i\in\{1,\cdots,n\}$, \textsc{Free}, or \textsc{Unknown} (Figure~\ref{fig:fov}). \textsc{Free} denotes that the voxel is a free space or an obstacle. We include the label $\textsc{Unknown}$ to take into account occlusion incurred by target objects or static obstacles. In this case, the corresponding voxel in $V$ does not give any information about the environment. An observation $o=\{(v,d(v)) | v\in V\}$ is defined as a set of voxel-label tuples. This can be thought of as the result of voxelization and object segmentation given a raw point cloud.



We can factor $o$ by objects in the following way. First, given the robot state $s_r$ at which $o$ is received, the voxels in $V$ have known locations. Under this condition, $V$ can be reduced to exclude voxels labeled \textsc{Unknown} while still maintaining the same information. Then, $V$ can be decomposed by objects into $V_1,\cdots,V_n$ where for any $v\in V_i$, $d(v)\in\{i,\textsc{Free}\}$ which retain the same information as $V$ for a given robot state.\footnote{The FOV $V$ is fixed for a given camera pose in the robot state, therefore excluding \textsc{Unknown} voxels does not lose information.} Hence, the observation $o=\bigcup_{i=1}^n o_i$ where $o_i=\{(v,d(v))|v\in V_i\}$.


\textbf{Action space $\Aspace$.}
Searching for objects generally requires three basic
capabilities: \emph{moving}, \emph{looking}, and \emph{declaring} an object to be found at some location.
Formally, the action space consists of these three types of primitive actions: $\MOVE(s_r,g)$ action moves the robot from pose in $s_r$ to destination $g\in G$ stochastically. $\LOOK(\theta)$ changes the
camera pose to look in the direction specified by $\theta\in\mathbb{R}^3$, and
projects a viewing frustum $V$. $\DETECT(i,g)$ declares object $i$ to be found
at location $g$. The implementation of these actions may vary depending on the
type of search space or robot. Note that this formulation allows macro actions, such as ``look after move'' to be composed for planning.

\textbf{Transition function $T$.} Target objects and obstacles are static objects, thus $\Pr(s_i'|s,a)=\bm{1}(s_i' = s_i)$. For the robot, the actions \textsc{Move}($s_r$, $g$) and $\LOOK(\theta)$ change the camera location and direction to $g$ and $\theta$ following a domain-specific stochastic dynamics function. The action $\DETECT(i,g)$ adds $i$ to the set of \emph{found objects} in the robot state only if $g$ is within the FOV determined by $s_r$.

\textbf{Reward function $R$.} The correctness of declarations can only be determined by, for example, a human who has knowledge about the target object or additional interactions with the object; therefore, we consider declarations to be expensive. The robot receives $\rmax\gg 0$ if an object is correctly identified by a $\DETECT$
action, otherwise the $\DETECT$ action incurs a $\rmin\ll 0$ penalty. $\MOVE$ and
$\LOOK$ receive a negative step cost $R_{\text{step}}<0$ dependent on the robot state and the action itself. This is a sparse reward function.

\subsection{Observation Model}
\label{sec:observation_model}
We have previously described how a volumetric observation $o$ can be factored by objects into $o_1,\cdots,o_n$. Here, we describe a method to model $\Pr(o_i|s',a)$, the probabilistic distribution over an observation $o_i$ for object $i$.

Modeling a distribution over a 3D volume is a challenging problem \cite{park2019deepsdf}. To develop an efficient model, we make the simplifying assumption that object $i$ is contained within a single voxel located at the grid cell $g=s_i'$. We address the case of searching for objects of unknown sizes with our planning algorithm (Section~\ref{sec:planning_algorithm}) that plans at multiple resolutions in parallel.

Under this assumption, $d(v)=\textsc{Free}$ deterministically for $v\neq s_i'$, and the uncertainty of $o_i$ is reduced to the uncertainty of $d(s_i')$. As a result, $\Pr(o_i|s',a)$ can be simplified to $\Pr(d(s_i)|s',a)$. When $s_i'\not\in V_i$, either $d(s_i')=\textsc{Unknown}$ (occlusion) or $s_i'\not\in V$ (not in FOV). In this case, there is no information regarding the value of $d(s_i')$ in the observation $o_i$, therefore $\Pr(d(s_i')|s',a)$ is a uniform distribution. When $s_i'\in V_i$, that is, the non-occluded region within the FOV covers $s_i'$, the case of $d(s_i')=i$ indicates correct detection while $d(s_i')=\textsc{Free}$ indicates sensing error. We let $\Pr(d(s_i')=i|s',a)=\alpha$ and $\Pr(d(s_i')=\textsc{Free}|s',a)=\beta$. It should be noted that $\alpha$ and $\beta$ do not necessarily sum to one because the belief update equation does not require the observation model to be normalized, as explained in Section~\ref{sec:pomdp}. Thus, hyperparameters $\alpha$ and $\beta$ independently control the reliability of the observation model.

\section{Octree Belief Representation}
\label{sec:octree_belief}

Particle belief representation \cite{silver2010monte,somani2013despot} suffers
from particle depletion under large observation spaces.  Moreover, if the resolution
of $G$ is dense, it may be possible that most of 3D grid cells do not contribute
to the behavior of the robot.

We represent a belief state $b_t^i(s_i)$ for object $i$ as an octree, referred to as an \emph{octree belief}. It can be constructed incrementally as observations are received and it tracks the belief of object state at different resolution levels. Furthermore, it allows efficient belief sampling and belief update using the per-voxel observation model (Sec.~\ref{sec:observation_model}).

An octree belief consists of an octree and a normalizer. An octree is a tree where every node has $8$ children. In our context, a node represents a grid cell $g^l\in G^l$, where $l$ is the resolution level, such that $g^l$ covers a cubic volume of $(2^l)^3$ ground-level grid cells; the ground resolution level is given by $l=0$. The 8 children of the node equally subdivide the volume at $g^l$ into smaller volumes at resolution level $l-1$ (Figure~\ref{fig:octree}). Each node stores a value $\Val_t^i(g^l)\in\mathbb{R}$, which represents the unnormalized belief that $s_i^l=g^l$, that is, object $i$ is located at grid cell $g^l$.  We denote the set of nodes at resolution level $k<l$ that reside in a subtree rooted at $g^l$ by $\Children^{k}(g^l)$. By definition, $b_t^i(g^l)=\Pr(g^l|h_t)=\sum_{c\in\Children^{k}(g^l)}\Pr(c|h_t)$. Thus, with a normalizer $\Norm_t=\sum_{g\in G}\Val_t^i(g)$, we can rewrite the normalized belief as:
\begin{align}
  \label{eq:octbdef}
b_t^i(g^l)=\frac{\Val_t^i(g^l)}{\Norm_t}=\sum_{c\in\Children^{k}(g^l)}\left(\frac{\Val_t^i(c)}{\Norm_t}\right),
\end{align}
which means $\Val_t^i(g^l)=\sum_{c\in\Children^{k}(g^l)}\Val_t^i(c)$. In words, the value stored in a node is the sum of values stored in its children. The normalizer equals to the sum of values stored in the nodes at the ground resolution level.

The octree does not need to be constructed fully in order to query the
probability at any grid cell. This can be achieved by setting a default value
$\Val_0^i(g)=1$ for all ground grid cells $g\in G$ not yet present in
the octree. Then, any node corresponding to $g^l$ has a default value of
$\Val_0^i(g^l)=\sum_{c\in\Children^1(g^l)}\Val_0^i(c)=|\Children^1(g^l)|$.


\begin{figure}[t]
\centering
\includegraphics[width=0.7\linewidth]{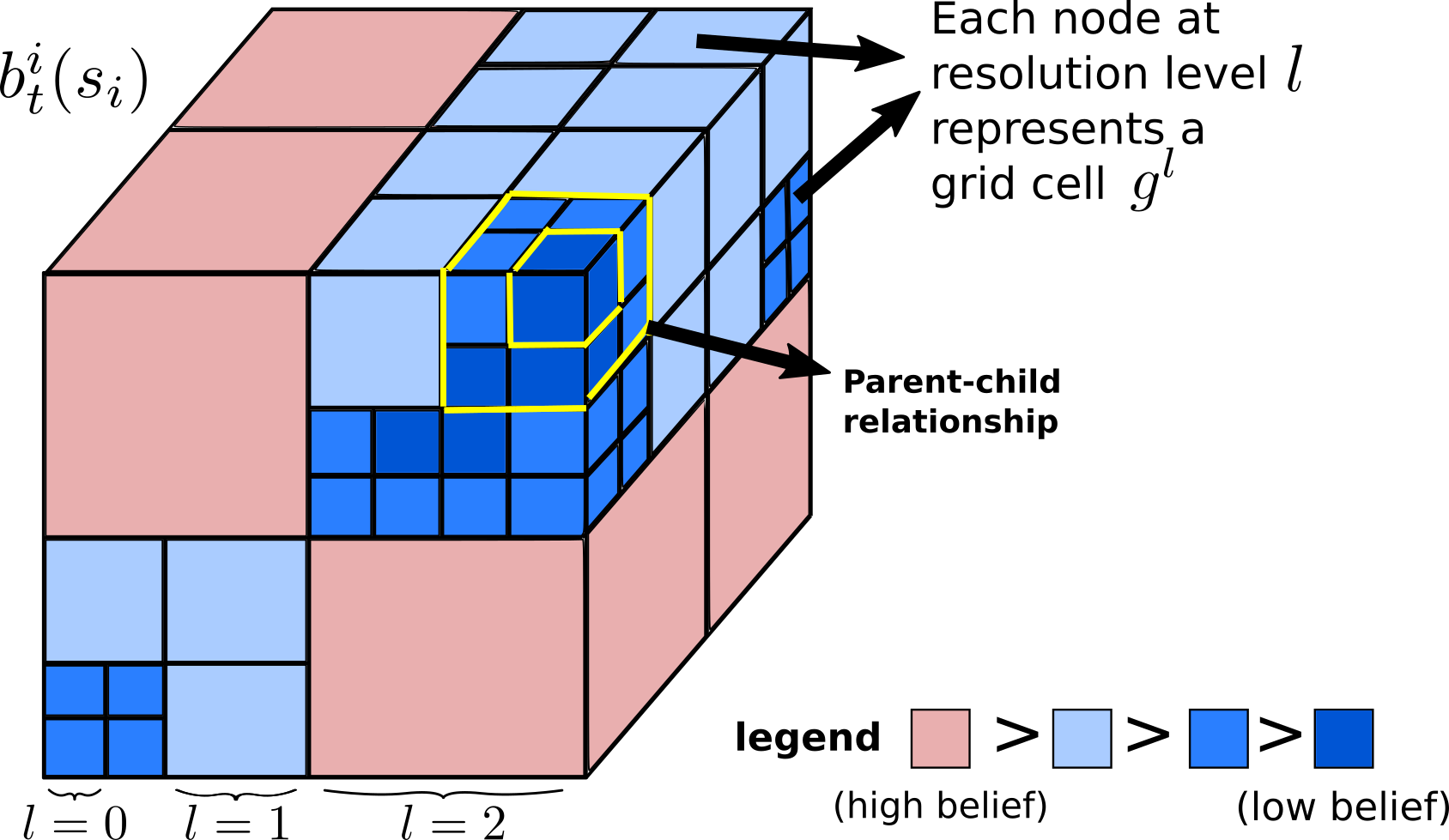}
\caption{Illustration the octree belief representation $b_t^i(s_i)$. The color
  on a node $g^l$ indicates the belief $\Val_t^i(g^l)$ that the object is
  located within $g^l$. The highlighted grid cells indicate parent-child
  relationship between a grid cell at resolution level $l=1$ (parent) and one at
  level $l=0$.}
\label{fig:octree}
\vspace{-1em}
\end{figure}

\subsection{Belief Update}
\label{sec:octree:belief_update}
We have defined a per-voxel observation model for $\Pr(o_i|s',a)$, which is reduced to $\Pr(d(s_i')|s',a)$ if $s_i'\in V_i$, or a uniform distribution if $s_i'\not\in V_i$. This suggests that the belief update need only
happen for voxels that are inside the FOV to reflect the information in the
observation.

Upon receiving observation $o_i$ within the FOV
$V_i$, belief is updated according to Algorithm \ref{alg:belief_update}.
This algorithm updates the value of the ground-level node $g$
corresponding to each voxel $v\in V_i$ as $\Val_{t+1}^i(g)=\Pr(d(v)|s',a)\Val_{t}^i(g)$. The normalizer is updated to make sure $b_{t+1}^i$ is normalized
\begin{lemma}
  The normalizer $\Norm_t$ at time $t$ can be correctly updated by adding the
  incremental update of values as in Algorithm \ref{alg:belief_update}.
\end{lemma}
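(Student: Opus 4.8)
The plan is to show that the incremental rule in Algorithm~\ref{alg:belief_update} reproduces exactly $\sum_{g\in G}\Val_{t+1}^i(g)$, which is by definition the correct normalizer $\Norm_{t+1}$. The entire argument rests on one structural fact: a multiplicative belief update confined to the FOV alters only finitely many terms of this sum, so the new total is the old total plus the accumulated per-voxel changes, computable in $O(|V_i|)$ time.

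First I would split the defining sum according to whether a ground cell lies inside the field of view, using $V\subseteq G$ so that each voxel is a ground cell and $V_i\subseteq G$. The one step requiring justification rather than algebra --- and the part I expect to be the main obstacle --- is the claim that every cell outside the FOV keeps its value, i.e. $\Val_{t+1}^i(g)=\Val_t^i(g)$ for $g\in G\setminus V_i$. This follows from the observation model of Section~\ref{sec:observation_model}: for such a cell the factor $\Pr(d(s_i')\mid s',a)$ is uniform, and because the model need not be normalized this constant may be taken to be $1$, so Algorithm~\ref{alg:belief_update} never modifies cells outside the FOV. I would also record here that a cell $v\in V_i$ encountered for the first time carries its default value $\Val_t^i(v)=1$, which is already the value that $\Norm_t=\sum_{g\in G}\Val_t^i(g)$ ascribes to it; hence the partially instantiated octree needs no special-case handling.

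With those facts, the rest is a telescoping identity. Adding and subtracting $\sum_{v\in V_i}\Val_t^i(v)$,
\begin{align*}
\Norm_{t+1}
&=\sum_{g\in G\setminus V_i}\Val_t^i(g)+\sum_{v\in V_i}\Val_{t+1}^i(v)\\
&=\Norm_t+\sum_{v\in V_i}\bigl(\Val_{t+1}^i(v)-\Val_t^i(v)\bigr),
\end{align*}
and substituting the per-voxel rule $\Val_{t+1}^i(v)=\Pr(d(v)\mid s',a)\,\Val_t^i(v)$ yields $\Norm_{t+1}=\Norm_t+\sum_{v\in V_i}\bigl(\Pr(d(v)\mid s',a)-1\bigr)\Val_t^i(v)$. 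This is precisely the increment accumulated over the visited voxels in Algorithm~\ref{alg:belief_update}, and since it equals $\sum_{g\in G}\Val_{t+1}^i(g)$ by construction, the updated normalizer is correct.
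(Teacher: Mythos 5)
Your proof is correct and follows essentially the same route as the paper's: decompose $\Norm_{t+1}=\sum_{g\in G}\Val_{t+1}^i(g)$ into cells inside and outside $V_i$, use the uniformity (taken as the constant $1$) of the unnormalized observation model outside the FOV to conclude those values are unchanged, and telescope against $\Norm_t$. Your additional remark that first-visited cells carry the default value $\Val_t^i(v)=1$ already accounted for in $\Norm_t$ is a small, valid bookkeeping detail the paper leaves implicit.
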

\begin{proof}
  \label{pf:norm_proof}
The normalizer must be equal to the sum of node values at the ground level for the next belief $b_{t+1}^i$ to be valid (Equation~\ref{eq:octbdef}). That is, $\Norm_{t+1}=\sum_{s_i\in G}\Val^i_{t+1}(s_i)$. This sum can be decomposed into two cases where the object $i$ is inside of $V_i$ and outside of $V_i$; For object locations $s_i\not\in V_i$, the \emph{unnormalized} observation model is uniform, thus $\Val^i_{t+1}(s_i)=\Pr(d(s_i)|s',a)\Val^i_{t}(s_i)=\Val^i_{t}(s_i)$. Therefore, $\Norm_{t+1}=\sum_{s_i\in V_i}\Val^i_{t+1}(s_i) + \sum_{s_i\not\in V_i}\Val^i_{t}(s_i)$. Note the set $\{s_i | s_i\not\in V_i\}$ is equivalent as $\{s_i | s_i \in G\setminus V_i\}$. Using this fact and the definition of $\Norm_{t}$, we obtain
$\Norm_{t+1}=\Norm_t+\sum_{s_i\in V_i}\left(\Val^i_{t+1}(s_i) - \Val^i_{t}(s_i)\right)$
which proves the lemma.
\end{proof}

This belief update is therefore exact since the objects are static.
The complexity of this algorithm is $O(|V|\log(|G|)$;
 Inserting nodes and updating values of nodes can be done by traversing the tree depth-wise.

\begin{algorithm}[t]
\caption{OctreeBeliefUpdate $(b_t^i,a,o_i)\rightarrow b_{t+1}^i$}
\label{alg:belief_update}
\SetKwInOut{Input}{input}
\SetKwInOut{Output}{output}
\Input{$b_t^i$: octree belief for object $i$; $a$: action taken by robot; $o_i=\{(v,d(v)|v\in V_i\}$: factored observation for object $i$}
\Output{$b_{t+1}^i$: updated octree belief}
\tcp{Let $\Psi(b_i^t)$ denote the octree underlying $b_t^i$. }
\For{$v\in V_i$}{
  $s_i\gets v$\tcp*{State at grid cell corresponding to voxel $v$}
  \If{$s_i\not\in\Psi(b_i^t)$}{
    Insert node at $s_i$ to $\Psi(b_i^t)$\;
  }
  $\Val_{t+1}^i(s_i)\gets \Pr(d(v)|s',a)\Val_{t}^i(s_i)$\;
  $\Norm_{t+1}\gets\Norm_{t} + \Val_{t+1}^i(s_i) - \Val_{t}^i(s_i)$\;
}
\end{algorithm}

\subsection{Sampling}
Octree belief affords exact belief sampling at any resolution level in
logarithmic time complexity with respect to the size of the search space
$|G|$, despite not being completely built.
Given resolution level $l$, we sample from $\mathcal{S}_i^l$ by traversing the
octree in a depth-first manner.  Let $l_{max}$ denote the maximum resolution
level for the search space. Let $l_{des}$ be the \emph{desired} resolution level
at which a state is sampled. If $s_i^{l_{des}}$ is sampled, then all nodes in
the octree that cover $s_i^{l_{des}}$ , i.e, $s_i^{l_{max}},\cdots,s_i^{l_{des}+2},s_i^{l_{des}+1}$, must also be implicitly sampled. Also, the event that $s_i^{l+k}$ is sampled is independent from other samples given that $s_i^{l+k+1}$ is sampled. Hence,
the task of sampling $s^{l_{des}}$ is translated into sampling a
sequence of samples
$s_i^{l_{max}},\cdots,s_i^{l_{des}+2},s_i^{l_{des}+1},s_i^{l_{des}}$, each
according to the distribution
$\Pr(s_i^{l}|s_i^{l+1},h_t)=\frac{\Val_t^i(s_i^{l})}{\Val_t^i(s_i^{l+1})}$. Sampling
from this probability distribution is efficient, as the sample space, i.e. the
children of node $s_i^{l+1}$ is only of size 8. Therefore, this sampling scheme
yields a sample $s^{l_{des}}$ exactly according to $b_t^{i}(s^{l_{des}})$ with
time complexity $O(\log(|G|))$.

\section{Multi-Resolution Planning via Abstractions}
  \label{sec:planning_algorithm}


POUCT expands an MCTS tree using a \emph{generative function} $(s',o,r)\sim\mathcal{G}(s,a)$,
which is straightforward to acquire since we explicitly define the 3D-MOS models. However,
directly applying POUCT is subject to high branching factor due to the large observation space in our domain.

Our intuition is that octree belief imposes a spatial state abstraction,
which can be used to derive an abstraction over observations, reducing the branching factor for planning.
Below, we formulate an
\emph{abstract 3D-MOS} with smaller spaces, and propose our multi-resolution planning algorithm.

\subsection{Abstract 3D-MOS}
\label{sec:abstract_3dmos}
We adopt the abstraction scheme in \citet{li2006towards} where in general, the abstract transition and reward functions are weighted sums of the original problem's transition and reward functions, respectively with weights that sum up to~1. We define an abstract 3D-MOS
$\langle \hat{\Sspace}, \hat{\Aspace}, \hat{\Ospace},
\hat{T},\hat{O},R,\gamma,l\rangle$ at resolution level $l$ as follows.

\textbf{State space $\hat{\mathcal{S}}$.} For each object $i$, an abstraction function $\phi_i:\Sspace_i\rightarrow\Sspace_i^l$ transforms the ground-level object state $s_i$ to an abstract object state $s_i^l$ at resolution level $l$. The abstraction of the full state is $\hat{s}=\phi(s)=\{s_r\}\cup\bigcup_i \phi_i(s_i)$ where the robot state $s_r$ is kept as is. The \emph{inverse image} $\phi^{-1}_i(s_i^l)$ is the set of ground states that correspond to $s_i^l$ under $\phi_i$ \cite{li2006towards}.

\textbf{Action space $\hat{\Aspace}$.} Since state abstraction lowers the resolution of the search space, we consider macro move actions that move the robot over longer distance at each planning step. Each macro move action \textsc{MoveOp}$(s_r,g)$ is an \emph{option} \cite{sutton1999between} that moves $s_r$ to goal location $g$ using multiple \textsc{Move} actions. The primitive \textsc{Look} and \textsc{Find} actions are kept.

\textbf{Transition function $\hat{T}.$} Targets and obstacles are still static, and the robot state still transitions according to the ground-level transition function. However, the transition of the found set from $\mathcal{F}$ to $\mathcal{F}'$ is special since the action $\textsc{Find}(i,g)$ operates at the ground level while $s_i^l$ has a lower resolution ($l>0$).
Let $f_i$ be the binary state variable that is true if and only if object $i\in\mathcal{F}$. Because the action \textsc{Find}$(i,g)$ affects $f_i$ based only on whether object $i$ is located at $g$, and that the problem is no longer Markovian due to state abstraction \cite{bai2016markovian}, $f_i$ transitions to $f_i'$ following
\begin{align}
\begin{split}
&\Pr(f_i'|f_i,s_i^l,h_t,\textsc{Find}(i,g))
\end{split}\\
&\ \ =\sum_{s_i\in\phi^{-1}_i(s_i^l)}\Pr(f'_i | s_i, f_i,\textsc{Find}(i,g))\Pr(s_i|s_i^l,h_t).
\end{align}
The above is consistent with the abstract transition function in the works \cite{li2006towards,bai2016markovian} where the first term corresponds to the ground-level deterministic transition function and the second term $\Pr(s_i|s_i^l,h_t)$, stored in the octree belief, is the \emph{weight} that sums up to 1 for all $s_i\in\Sspace_i$.

\textbf{Observation space $\hat{\Ospace}$ and function $\hat{O}$.}
For the purpose of planning, we again use the assumption that an object is contained within a single voxel (yet at resolution level $l$). Then, given state $\hat{s}'$, the abstract observation $o_i^l$ is regarded as a voxel-label pair $(s_i^l, d(s_i^l))$. Since it is computationally expensive to sum out all object states, we approximate the observation model by ignoring objects other than $i$:
\begin{align}
&\Pr(o_i^l|\abst{s}',a,h_t)=\Pr(d(s_i^l)|\hat{s}',a,h_t)\\
&\qquad\approx\Pr(d(s_i^l)|s^l_i,s_r,a,h_t)\\\
&\qquad=\sum_{s_i\in\phi_i^{-1}(s_i^l)}\Pr(d(s_i^l)|s_i,s_r,a)\Pr(s_i|s_i^l,h_t).
\end{align}
This resembles the abstract transition function, where $\Pr(d(s_i^l)|s_i,s_r,a)$
is the ground observation function, and $\Pr(s_i|s_i^l,h_t)$ is again the
weight.

For practical POMDP planning, it can be inefficient to sample from this abstract
observation model if $l$ is large. In our implementation, we approximate this distribution by Monte Carlo sampling \cite{shapiro2003monte}: We sample $k$ ground states from $\phi_i^{-1}(s_i^l)$ according to their weights.\footnote{We tested $k=10$ and $k=40$ and observed similar search performance. We used $k=10$ in our experiments.} Then we set $d(s_i^l)=i$ if the majority of these samples have $d(s_i)=i$, and $d(s_i^l)=\textsc{Free}$ otherwise. A similar approach is used for sampling from the abstract transition model.

\textbf{Reward function $R$.} The reward function is the same as the one in ground 3D-MOS, since computing the reward only depends on the robot state which is not abstracted and the abstract action space consists of the same primitive actions as 3D-MOS. Therefore, solving an abstract 3D-MOS is solving the same task as the original 3D-MOS.

\begin{algorithm}[t]
\caption{MR-POUCT $(\mathcal{P},b_t,d)\rightarrow \abst{a}$}
\label{alg:planning}
\SetKwInOut{Input}{input}
\SetKwInOut{Output}{output}
\SetKwFunction{plan}{Plan}
\SetKwFunction{solve}{PO-UCT}
\SetKwFunction{simulate}{Simulate}
\SetKwFunction{buildgenerator}{GenerativeFunction}
\SetKwFunction{rollout}{Rollout}
\SetKwProg{myproc}{procedure}{}{}
\Input{$\mathcal{P}$: a set of abstract 3D-MOS instances at different resolution levels; $b_t$: belief at time $t$; $d$: planning depth}
\Output{$\hat{a}$: an action in the action space of some $P_l\in\mathcal{P}$}
\myproc{\plan{$b_t$}}{
\ForEach{$P_l\in\mathcal{P}$ in parallel}{
  \tcp{Recall that $P_l=\langle \abst{\Sspace},\abst{\Aspace},\abst{\Ospace},\abst{T},\abst{O},R,\gamma,l \rangle$}
  $\mathcal{G}\gets$ GenerativeFunction($P_l$)\;
  $Q_P(b_t, \abst{a})\gets$ POUCT($\mathcal{G},h_t,d$)\;
}
$\abst{a}\gets\argmax_{\abst{a}}\{Q_P(b_t,\abst{a}) | P\in\mathcal{P}\}$\;
\KwRet{$\abst{a}$}
}
\end{algorithm}

\subsection{Multi-Resolution Planning Algorithm}
Abstract 3D-MOS is smaller than the original 3D-MOS which may provide benefit
in online planning. However, it may be difficult to define a single resolution
level, due to the uncertainty of the size or shape of objects, and the unknown
distance between the robot and these objects.

Therefore, we propose to solve a number of abstract 3D-MOS problems in parallel,
and select an action from $\abst{\Aspace}$ with the highest value for execution.
The algorithm is formally presented in Algorithm \ref{alg:planning}.  The set of
abstract 3D-MOS problems, $\mathcal{P}$, can be defined based on the
dimensionality of the search space and the particular object search
setting. Then, it is straightforward to define a \emph{generative function}
$\mathcal{G}(\hat{s},\hat{a})\rightarrow(\hat{s}',\hat{o},r)$ from an abstract
3D-MOS instance $P$ using its transition, observation and reward functions.
POUCT uses $\mathcal{G}$ to build a search tree and plan the next action. Thus,
all problems in $\mathcal{P}$ are solved online in parallel, each by a separate
POUCT. The final action with the highest value $Q_P(b_t,\hat{a})$ in its
respective POUCT search tree is chosen as the output (see
\cite{silver2010monte} for details on POUCT).
We call this
algorithm Multi-Resolution POUCT (MR-POUCT).


\section{Experiments}

We assess the hypothesis that our approach, MR-POUCT, improves the robot's
ability to efficiently and successfully find objects especially in large search
spaces. We conduct a simulation evaluation and a study on a real robot.

\subsection{Simulation Evaluation}

\textbf{Setup.} We implement our approach in a simulated environment designed to reflect the essence of the 3D-MOS domain (Figure~\ref{fig:sim_env}). Each simulated problem instance is defined by a tuple $(m,n,d)$, where the search region $G$ has size $|G|=m^3$ with $n$ randomly generated, randomly placed objects. The on-board camera projects a viewing frustum with 45 degree FOV angle, an 1.0 aspect ratio, a minimum range of 1 grid cell, and a maximum range of $d$ grid cells. Hence, we can increase the difficulty of the problem by increasing $m$ and $n$, or by reducing the percentage of voxels covered by a viewing frustum through reducing the FOV range $d$. Occlusion is simulated using perspective projection and treating each grid cell as a point.

There are two primitive $\MOVE$ actions per axis (e.g. $+z$, $-z$) that each moves the robot along that axis by one grid cell. There are two $\LOOK$ actions per axis, one for each direction. Finally, a $\DETECT$ action is defined that declares all not-yet-found objects within the viewing frustum as found. Thus, the total number of primitive actions is $13$.
$\MOVE$ and $\LOOK$ actions have a step cost of -1.
A successful $\DETECT$ receives +1000 while a failed attempt receives -1000. A $\DETECT$ action is successful if part of a new object lies within the viewing frustum.
If multiple new objects are present within one viewing frustum when the $\DETECT$ is taken, only the maximum reward of $+1000$ is received. The task terminates either when the total planning time limit is reached or $n$ $\DETECT$ actions are taken.

\begin{figure}[t]
\centering
\includegraphics[width=1.0\linewidth]{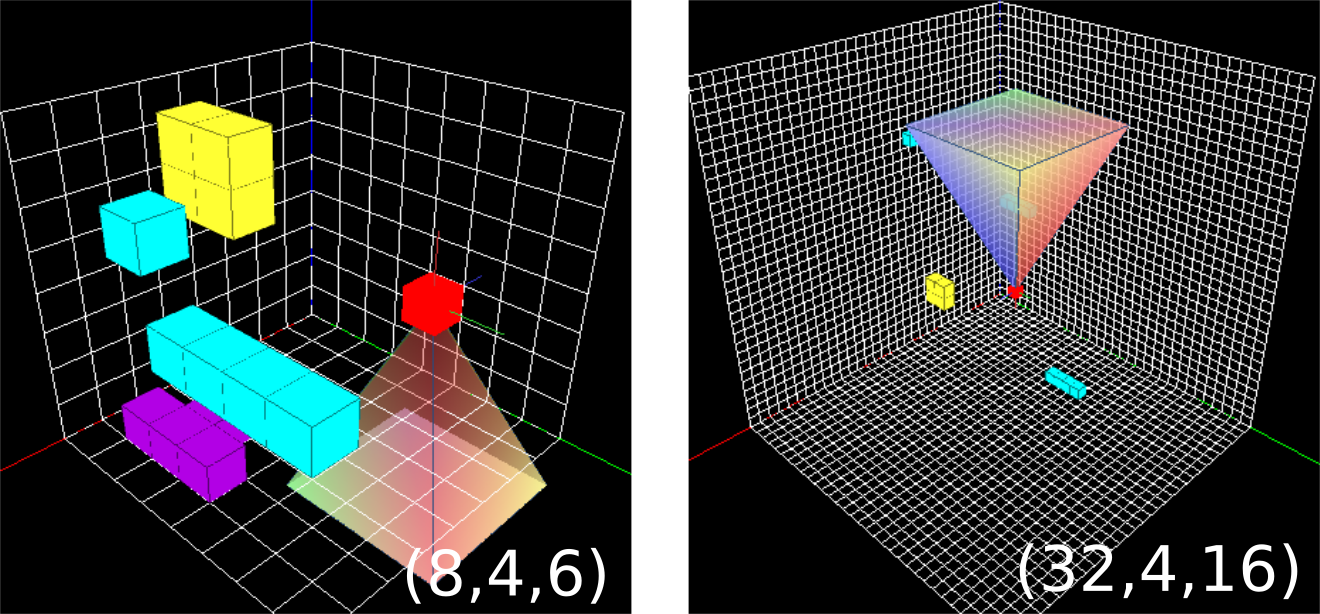}
\caption{Simulated environment for 3D object search. The robot (represented as a red cube) can project a viewing frustum to observe the search space, where objects are represented by sets of cubes. Search space size scales from $4^3$ to $32^3$. The tuple $(m,n,d)$ at lower-right defines the problem instance.}
\label{fig:sim_env}
\end{figure}

\begin{figure*}[t]
\centering
\includegraphics[width=0.95\linewidth]{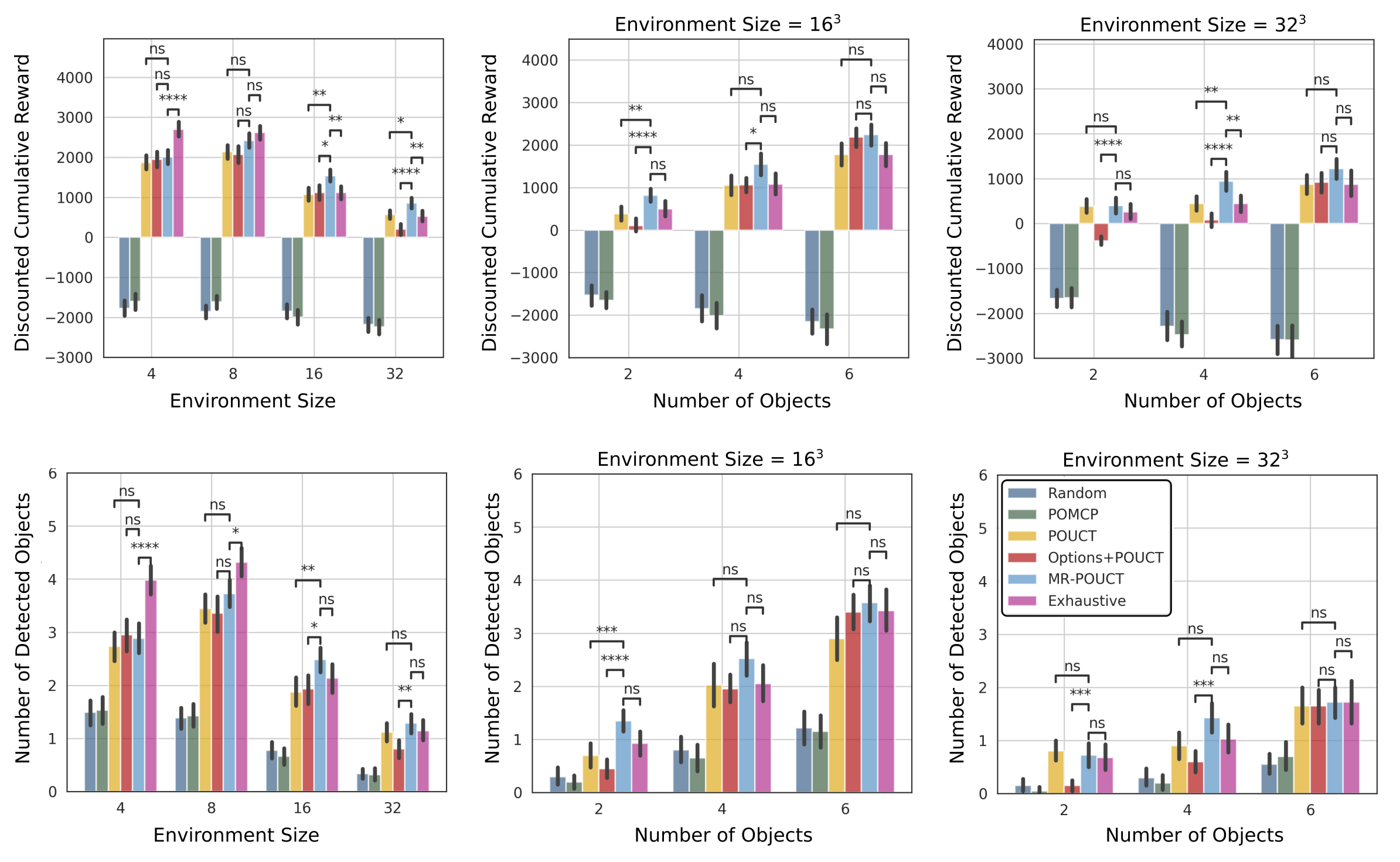}
\caption{Discounted cumulative reward and number of detected objects as the environment size ($m$) increases and as the of number of objects ($n$) increases.  Exhaustive search performs well in small-scale environments (4 and 8) where exploration strategy is not taken advantage of. In large environments, our method MR-POUCT performs better than the baselines in most cases. The error bars are 95\% confidence intervals. The level of statistical significance is shown, comparing MR-POUCT against POUCT, Options+POUCT, and Exhaustive, respectively, indicated by \texttt{ns} ($p>0.05$), * ($p\leq 0.05$), ** ($p\leq 0.01$), *** ($p\leq 0.001$), **** ($p\leq 0.0001$).}
  \label{fig:scalability_overall}
\end{figure*}

\begin{figure*}[t]
\centering
\includegraphics[width=\linewidth]{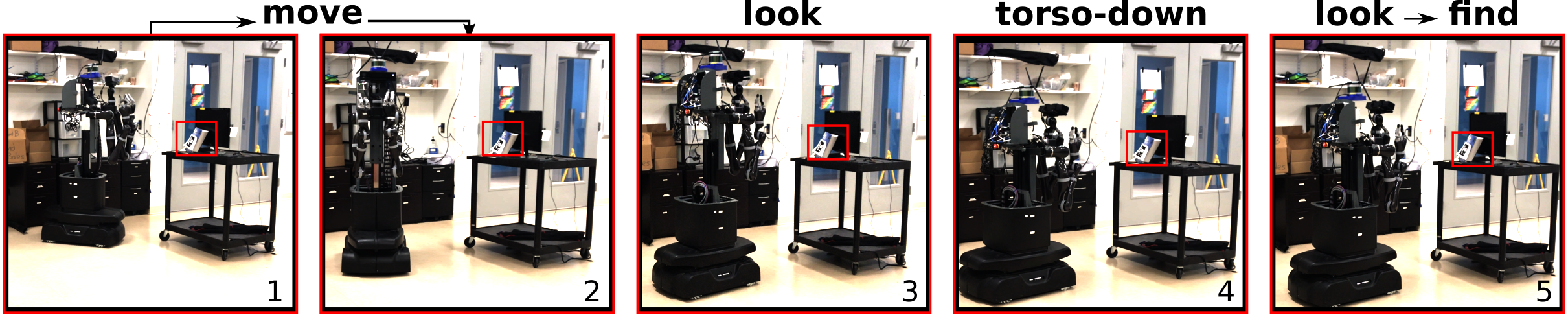}
\caption{ Example action sequence produced by the proposed approach. The mobile robot
  first navigates in front of a portable table (1-2). It then takes a $\LOOK$ action to observe the space in front (3), and no target is observed since the torso is too high. The robot then decides to lower its torso (4), takes another $\LOOK$ action in the same direction, and then $\DETECT$ to mark the object as found (5). This sequence of actions demonstrate that our algorithm can produce efficient search strategies in real world scenarios.}
\label{fig:seq}
\end{figure*}

\begin{figure}[thb]
\centering
\includegraphics[width=0.872\linewidth]{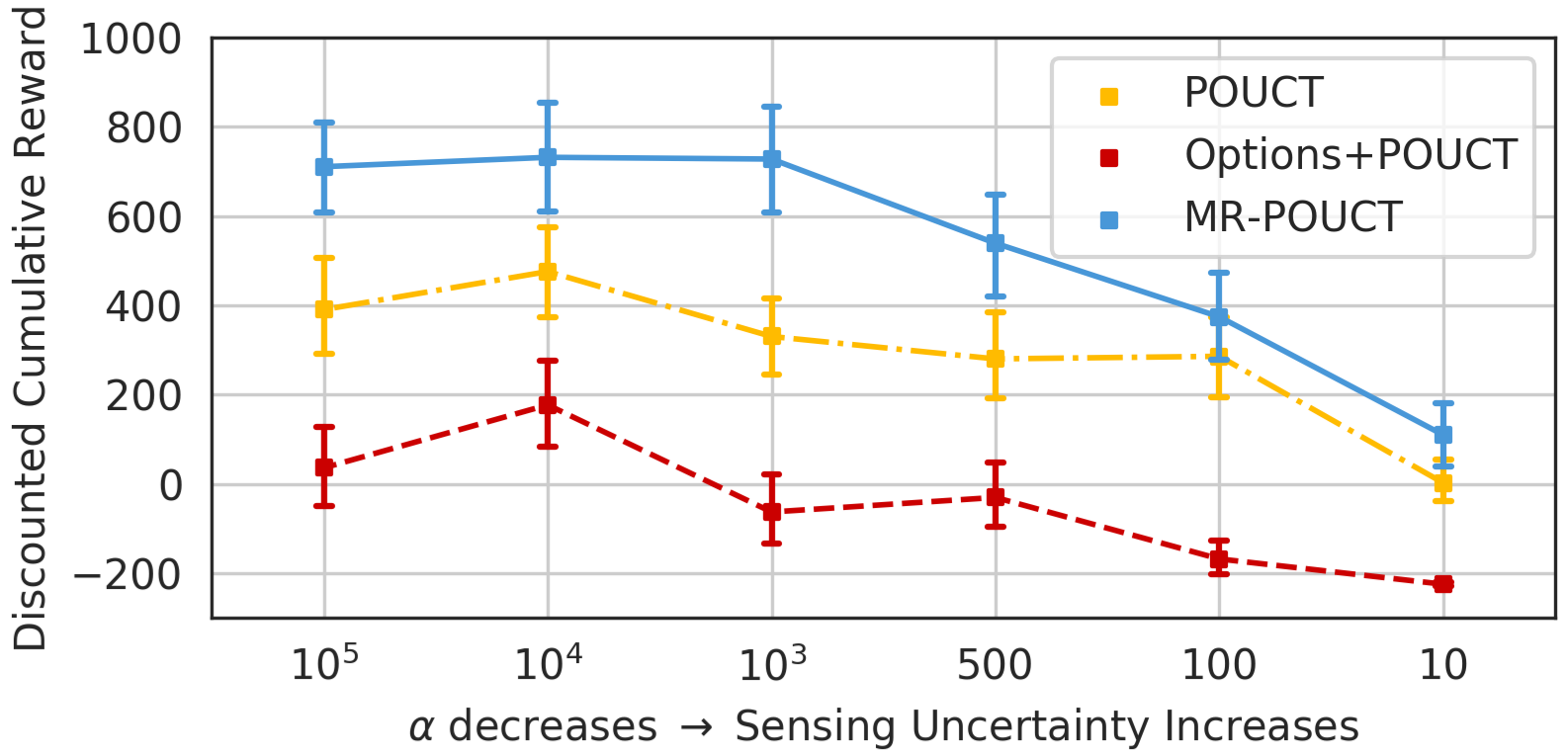}
\caption{Discounted cumulative reward with 95\% confidence interval as the sensing uncertainty increases, aggregating over the $\beta$ settings.}
  \label{fig:quality_results}
  \vspace{-1em}
\end{figure}

\textbf{Baselines.} We compare our approach (\emph{MR-POUCT}) with the following baselines:
  \emph{POUCT} uses the octree belief but solves the ground POMDP directly using the original POUCT algorithm. 
\emph{Options+POUCT} uses the octree belief and a resolution hierarchy, but  only the motion action abstraction (i.e. $\MOVEOP$ options) is used, meaning that the agent can move for longer distances per planning step but do not make use of state and observation abstractions.
\emph{POMCP} uses a particle belief representation which is subject to particle deprivation. Uniform random rollout policy is used for all POMDP-based methods. \emph{Exhaustive} uses a hand-coded exhaustive policy, where the agent traverses every location in the search environment. At every location, the agent takes a sequence of $\LOOK$ actions, one in each direction. Finally, \emph{Random} executes actions at uniformly at random.

Each algorithm begins with uniform prior and is allowed a maximum of 3.0s for planning each step. The total amount of allowed planning time plus time spent on belief update is 120s, 240s, 360s, and 480s for environment sizes ($m$) of 4, 8, 16, or 32, respectively. Belief update is not necessary for \emph{Exhaustive} and \emph{Random}. The maximum number of planning steps is 500. The discount factor $\gamma$ is set to $0.99$. For each $(m,n,d)$ setting, 40 trials (with random world generation) are conducted.

\textbf{Results.}
We evaluate the scalability of our approach with 4 different settings of search space size $m\in\{4,8,16,32\}$ and 3 settings of number of objects $n\in\{2,4,6\}$, resulting in 12 combinations. The FOV range $d$ is chosen such that the percentage of the grids covered by one projection of the viewing frustum decreases as the world size $m$ increases.\footnote{The maximum FOV coverage for $m=4,8,16,$ and $32$ is $17.2\% (d=4), 8.8\% (d=6), 4.7\% (d=10),$ and $2.6\% (d=16)$, respectively.}
The sensor is assumed to be near-perfect, with $\alpha=10^5$ and $\beta=0$.
We measure the discounted cumulative reward, which reflects both the search efficiency and effectiveness, as well as the number of objects found per trial.

Results are shown in Figure~\ref{fig:scalability_overall}. Particle deprivation
happens quickly due to large observation space, and the behavior degenerates to
a random agent, 
causing {POMCP} to perform poorly. In small-scale domains, the \emph{Exhaustive} approach works well, outperforming the POMDP-based methods. We find that in those environments, the FOV can capture a significant portion of the environment, making exhaustive search desirable.
The POMDP-based approaches are competitive or better in the two largest search environments ($m=16$ and $m=32$). In particular, MR-POUCT outperforms \emph{Exhaustive} in all test cases in the larger environments, with greater margin in discounted cumulative reward; \emph{Exhaustive} takes more search steps but is less efficient.
When the search space contains fewer objects, {MR-POUCT} and {POUCT} show more resilience than {Options+POUCT}, with {MR-POUCT} performing consistently better.
This demonstrates the benefit of planning with the resolution hierarchy in octree belief especially in large search environments.

We then investigate the performance of our method with respect to changes in sensing uncertainty, controlled by the parameters $\alpha$ and $\beta$ of the observation model. According to the belief update algorithm in Section~\ref{sec:octree:belief_update}, a noisy but functional sensor should increase the belief $\Val_t^i(g)$ for object $i$ if an observed voxel at $g$ is labeled $i$, while decrease the belief if labeled \textsc{Free}. This implies that a properly working sensor should satisfy $\alpha > 1$ and $\beta < 1$. We investigate on 5 settings of $\alpha\in\{10,100,500,10^3,10^4,10^5\}$ and 2 settings of $\beta\in\{0.3, 0.8\}$. A fixed problem difficulty of $(16,2,10)$ is used to conduct this experiment.
Results in Figure~\ref{fig:quality_results} show that {MR-POUCT} is consistently better in all parameter settings. We observe that $\beta$ has almost no impact to any algorithm's performance as long as $\beta < 1$, whereas decreasing $\alpha$ changes the agent behavior such that it must decide to $\LOOK$ multiple times before being certain.

\subsection{Demonstration on a Torso-Actuated Mobile Robot}

We demonstrate that our approach is scalable to real world settings by implementing the 3D-MOS problem as well as MR-POUCT for a mobile robot setting. We use the Kinova MOVO Mobile Manipulator robot, which has an
actuated torso with an extension range between around 0.05m and 0.5m, which facilitates a 3D action space. The robot operates in a lab environment, which is decomposed into two \emph{search regions} $G_1$ and $G_2$ of size roughly 10m$^2\times$ 2m (Figure.~\ref{fig:seq}), each with a semantic label (``shelf-area'' for $G_1$ and ``whiteboard-area'' for $G_2$). The robot is tasked to look for $n_{G_1}$ and $n_{G_2}$ objects in each search region sequentially, where objects are represented by paper AR tags that could be in clutter or not detectable at an angle. The robot instantiates an instance of the 3D-MOS problem once it navigates to a search region. In this 3D-MOS implementation, the $\MOVE$ actions are implemented based on a topological graph on top of a metric occupancy grid map. The neighbors of a graph node form the motion action space when the robot is at that node.
The robot can take $\LOOK$ action in 4 cardinal directions in place and receive volumetric observations; A volumetric observation is a result of downsampling and thresholding points in the corresponding point cloud. The robot was able to find 3 out of 6 total objects in the two search regions in around 15 minutes. One sequence of actions (Figure~\ref{fig:seq}) shows that the robot decides to lower its torso in order to $\LOOK$ and $\DETECT$ an object.\footnote{Video footage with visualization of volumetric observations and octree belief update is available at \href{https://zkytony.github.io/3D-MOS/}{https://zkytony.github.io/3D-MOS/}.} A failure mode is that the object may not be covered by any viewpoint and thus not detected; this can be improved with a denser topological map, or by considering destinations of $\MOVE$ actions sampled from the continuous search region.



\section{Conclusion}
\label{sec:conclusion}
We present a POMDP formulation of multi-object search in 3D with volumetric observation space and solve it with a novel multi-resolution planning algorithm. Our evaluation demonstrates that such challenging POMDPs can be solved online efficiently and scalably with practicality for a real robot by extending existing general POMDP solvers with domain-specific structure and belief representation.


One limitation of the presented work is that the assumption of object independence, though beneficial computationally, may discard useful object dependence information in some cases. Optimal search for correlated objects becomes important. In addition, we do not explicitly reason over object geometry in the observation model. Considering belief over geometric appearances is a challenging future direction. Finally, incorporating a heuristic rollout policy may be a promising direction for more realistic object search problems while sacrificing optimality.

\section*{Acknowledgements}
{\small The authors thank Selena Ling for help with the simulator.  This work was supported by the National Science Foundation under grant number IIS1652561, ONR under grant number N00014-17-1-2699, the US Army under grant number W911NF1920145, Echo Labs, STRAC Institute, and Hyundai.}

%

\bibliographystyle{IEEEtranN}
\bibliography{IEEEabrv,references}

\end{document}